\documentclass[a4paper,11pt]{article}

\usepackage{amsmath,amssymb,amsthm}
\newtheorem{proposition}{Proposition}
\newtheorem{remark}{Remark}

\usepackage[a4paper, margin=25mm]{geometry}
\usepackage{setspace}
\usepackage{graphicx}
\usepackage{booktabs}
\usepackage{longtable}
\usepackage{array}
\usepackage{caption}
\usepackage{float}
\graphicspath{{fig/}}

\usepackage{algorithm}
\usepackage{algpseudocode}

\usepackage{xcolor}
\usepackage{hyperref}
\usepackage{url}
\usepackage{enumitem}

\usepackage[style=authoryear, backend=biber]{biblatex}
\begin{document}

\title{A Generalized-Bayes Perspective on Counterfactual Explanations: Posterior-Based Decision-Making and Evaluation}
\author{Keita Kinjo\\Faculty of Business, Kyoritsu Women's University}
\date{}
\maketitle

\begin{abstract}
Counterfactual explanations (CEs) enhance the interpretability of machine learning models by identifying the smallest change to an input required to obtain a desired output. Although CEs are conventionally formulated as a distance-minimization problem, the theoretical basis of this formulation has received limited attention. We show that a distance-minimization-based CE is mathematically equivalent to the maximum a posteriori (MAP) estimate of a Gibbs posterior within the generalized Bayes framework, specifically when a distance-based prior is used. We call this formulation the Distance-Prior Generalized Bayes CE (DP-GBCE). Building on this posterior perspective, we introduce two decision rules beyond MAP within a unified framework: a Bayes decision that minimizes expected decision loss and CVaR-CE, a risk-averse decision rule. We also propose an extension that uses Bayesian model weights to mix the posterior distributions of multiple models, thereby accounting for model multiplicity, where several models have comparable predictive performance. Finally, we define metrics for evaluating both individual CEs and the posterior distribution as a whole, and use experiments on simulated data and Google Trends data to quantify the trade-offs among the decision rules.
\end{abstract}

\section{Introduction}

Counterfactual explanations have attracted considerable attention as a means of addressing the interpretability challenge in machine learning \parencite{guidotti2024counterfactual, verma2024counterfactual, karimi2022survey}. Given a trained black-box model, a counterfactual explanation identifies the smallest change to a particular instance's attributes required to obtain a desired prediction. It thereby indicates which attributes affect the prediction and what actions could change the outcome. For example, if a machine learning model denies an individual insurance coverage, a counterfactual explanation can identify the changes in attributes such as income or occupation that would result in approval. This approach is also known as algorithmic recourse \parencite{ustun2019actionable, karimi2020algorithmic}, and its connection to adversarial examples has likewise been noted \parencite{karimi2022survey, verma2024counterfactual, pawelczyk2022exploring}.

CE methods typically minimize the cost of changing the original instance into the counterfactual, thereby encouraging similarity between the two \parencite{wachter2017counterfactual, guidotti2024counterfactual, verma2024counterfactual, karimi2022survey}.

One rationale for this formulation draws on Lewis's notion of the closest possible world, constructed by making only the minimal changes to the actual world needed to evaluate a causal effect \parencite{lewis2013counterfactuals}. Despite this philosophical foundation, theoretical discussion has been limited. Although a large required change may intuitively suggest that the corresponding variable is important, the theoretical basis for this interpretation has rarely been examined.

Generalized Bayes extends conventional Bayesian inference by allowing a posterior distribution to be constructed from a loss function even when a likelihood is unavailable, as is often the case in machine learning \parencite{bissiri2016general}. It constructs a Gibbs posterior through exponential weighting based on the loss rather than on a likelihood.

In this study, we show that the CE obtained through distance minimization corresponds to the MAP estimate of a Gibbs posterior over CEs, thereby providing a probabilistic justification for the existing cost-minimization formulation. We further propose a posterior distribution over CEs that incorporates model uncertainty, and we present decision rules other than MAP for the resulting CE posterior. In addition, we propose metrics for evaluating individual CEs and the posterior distribution as a whole, and compare their performance. By clarifying the connection between CEs and Bayesian inference -- generalized Bayes in particular -- this study opens up alternative approaches to challenges such as robustness, as well as applications that draw on Bayesian insights.

Several previous studies have also examined CEs within a Bayesian framework (see Section 4).

This study differs from the existing literature in four main respects. First, it establishes the equivalence between CE distance minimization and MAP estimation of a Gibbs posterior, thereby providing existing methods with a probabilistic foundation. Second, it offers a model-agnostic framework that requires neither an explicit generative model nor a specified likelihood and can be applied to any learner -- differentiable or otherwise -- as long as its loss can be evaluated. This contrasts with \textcite{raman2023bayesian}, which requires a generative probability model and gradient information. Third, it places a probability distribution (the Gibbs posterior) over counterfactual candidates and enables MAP, Bayes, and CVaR-CE decision rules to be compared within a unified framework, allowing robust decision-making under uncertainty in the success region. This contrasts with \textcite{nguyen2022robust}, which seeks a single counterfactual point. Fourth, as a natural extension for addressing model multiplicity -- the coexistence of multiple models with comparable predictive performance -- it yields a distributional CE that incorporates model uncertainty by mixing the posterior distributions of the individual models using Bayesian model weights (see Section 2.4).

The remainder of this paper is organized as follows. Section 2 describes the proposed method, Section 3 presents the experiments, Section 4 details related work, and Section 5 provides a discussion.

\section{Counterfactual Explanations via Generalized Bayes}

\subsection{Counterfactual Explanation Formulation}

Let $D = \{(x_i,y_i)\}_{i=1}^{n}$, where $x_i \in \mathcal{X} \subseteq \mathbb{R}^{m}$ is an input and $y_i \in \mathcal{Y}$ is its target value. We take $\mathcal{Y}=\mathbb{R}$ for regression and $\mathcal{Y}=\{0,1\}$ for binary classification. Let $f:\mathcal{X} \rightarrow \mathcal{Y}$ denote a prediction model trained on $D$. We write $x_b$ for the input to be explained, $y=f(x_b)$ for its predicted outcome, and $y^* \in \mathcal{Y}$ for the desired output. A counterfactual explanation for $x_b$ is denoted by $\tilde{x} \in \mathcal{X}$. The function $d(x_b,\tilde{x})$ represents the cost or distance associated with the change, and $\ell(f(\tilde{x}),y^*)$ measures failure to attain the desired output. Squared-error loss is commonly used for continuous outcomes and cross-entropy loss for binary outcomes.

The CE problem is then formulated as follows \parencite{wachter2017counterfactual}:
\begin{equation}
    \tilde{x}^{CE} \in \underset{\tilde{x} \in \mathcal{X}}{\operatorname{argmin}}\left[ \lambda\ell(f(\tilde{x}),y^{*}) + d(x_{b},\ \tilde{x}) \right]
\end{equation}
The problem is therefore to find a $\tilde{x}$ that both brings the prediction close to the desired output $y^*$ (small loss) and remains close to the original input $x_b$ (small distance). Here, $\lambda \geq 0$ controls the weight assigned to the loss. A fixed $\lambda$ is widely used for three reasons. First, it is often difficult to solve either the constrained problem of minimizing $d(x_b,\tilde{x})$ subject to $\ell(f(\tilde{x}),y^*) \leq \varepsilon$ or the bi-objective problem of jointly minimizing $\ell$ and $d$; many implementations therefore use a fixed-weight penalized objective. Second, an exact constraint may yield no feasible solution for nonlinear models or noisy predictions, whereas a fixed $\lambda$ provides a soft constraint. Third, $\lambda$ has a natural interpretation as the trade-off between goal attainment and feasibility, and thus expresses the degree of tolerance permitted in the explanation \parencite{guidotti2024counterfactual, verma2024counterfactual, karimi2022survey}.

\subsection{Generalized Bayes and the Gibbs Posterior}

We next introduce the generalized Bayes framework and use it to recast the CE problem from a Bayesian perspective.

Generalized Bayes updates a prior distribution through exponential weighting based on a task-specific loss function, without requiring the likelihood assumed in conventional Bayesian inference \parencite{bissiri2016general}.

To review conventional Bayesian inference briefly, let $D$ denote the observed data and $\theta \in \Theta$ the unknown quantity of interest. Given a prior $p(\theta)$ and a probability model $p(y|x,\theta)$, the posterior is
\begin{equation}
    p(\theta|D) \propto p(D|\theta)p(\theta)
\end{equation}
Here, $p(D|\theta)$ is the likelihood and, under conditional independence, can be written as $\prod_{i = 1}^{n}p(y_i|x_i,\theta)$.

In generalized Bayes, a loss function $L(D,\theta) \in \mathbb{R}^{+}$ that evaluates $\theta$ against $D$ replaces the likelihood, yielding the following generalized Bayes posterior (or Gibbs posterior):
\begin{equation}
    p(\theta|D) \propto \exp( - \eta L(D,\theta))p(\theta)
\end{equation}
Here, $\eta$ is called the learning rate and controls the concentration of the posterior distribution.

Setting $L(D,\theta) = -\log p(D|\theta)$ and $\eta = 1$ recovers conventional Bayesian inference as a special case. Because the MAP estimate minimizes an objective defined by the loss and the prior, it can also be interpreted as a regularized M-estimator. Within PAC-Bayes theory, the Gibbs posterior arises as the distribution that optimally trades off expected loss against Kullback--Leibler divergence from the prior, and it can carry generalization guarantees even without a likelihood model \parencite{guedj2019primer}. These guarantees concern estimation of a parameter $\theta$; we do not claim that an analogous guarantee applies to the CE posterior over $\tilde{x}$ introduced in Section 2.3.

\subsection{Counterfactual Explanation as a Gibbs Posterior}

We now apply this framework to CEs and derive a Gibbs posterior over counterfactuals.

Let $\tilde{x}$ be the object of inference and let $p(\tilde{x}|x_b)$ be its prior distribution conditional on $x_b$. Using $\ell(f(\tilde{x}),y^*)$ as the loss yields
\begin{equation}
    p(\tilde{x}|x_{b},y^{*}) \propto \exp( - \eta\ell(f(\tilde{x}),y^{*}))p(\tilde{x}|x_{b})
\end{equation}
We refer to this formulation as the Generalized Bayes CE (GBCE).

Choosing the distance-based prior $p(\tilde{x}|x_{b}) \propto \exp(-d(x_b, \tilde{x}))$ gives
\begin{equation}
    p(\tilde{x}|x_{b},y^{*}) \propto \exp\!\left( - \eta\ell(f(\tilde{x}),y^{*}) - d(x_{b},\tilde{x})\right)
\end{equation}
We refer to this formulation as the Distance-Prior Generalized Bayes CE (DP-GBCE). Hereafter, unless otherwise noted, ``Gibbs posterior'' or ``Gibbs'' refers to the DP-GBCE. Algorithm~\ref{alg:dpgbce} in Section 2.6 summarizes the overall computational procedure: it approximates the DP-GBCE (and the ModelUnc construction described in Section 2.4) via importance sampling, applies the decision rules, and computes the evaluation metrics. The distribution $p(\tilde{x}|x_b)$ is a user-specified prior over counterfactual candidates and does not represent a data-generating process. Depending on the distance function, the Gibbs-type (or Boltzmann-type) prior $\exp(-d(x_b,\tilde{x}))$ includes the normal distribution (for squared Euclidean distance) and the Laplace distribution (for $L_1$ distance) as special cases.

\begin{proposition}
Under the above setting, the MAP estimator of the generalized-Bayes posterior reduces to the following CE optimization problem:
\begin{equation}
\label{eq:map}
    \hat{x}^{MAP} = \underset{\tilde{x}}{\operatorname{argmax}}\ p(\tilde{x}|x_{b},y^{*})
    = \underset{\tilde{x}}{\operatorname{argmin}}\left[ \eta\,\ell(f(\tilde{x}),y^{*}) + d(x_{b},\tilde{x}) \right]
\end{equation}
\end{proposition}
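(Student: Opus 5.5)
The argument is the standard chain of monotone transformations. First I make the normalization explicit. Write the DP-GBCE density as
\[
p(\tilde{x}|x_b,y^*) = \frac{1}{Z}\exp\!\left(-\eta\,\ell(f(\tilde{x}),y^*) - d(x_b,\tilde{x})\right),
\qquad
Z = \int_{\mathcal{X}} \exp\!\left(-\eta\,\ell(f(\tilde{x}'),y^*) - d(x_b,\tilde{x}')\right)d\tilde{x}' .
\]
I will assume $0<Z<\infty$, so that the posterior is proper. This holds, for instance, when $\ell\ge 0$ and the prior $\exp(-d(x_b,\cdot))$ is integrable, as in the Gaussian and Laplace cases mentioned above. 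The constant $Z$ depends on $x_b$, $y^*$ and $\eta$, but not on $\tilde{x}$.

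Next I use two facts. Multiplying by the positive constant $Z$ leaves the set of maximizers unchanged. Applying the strictly increasing map $\log$ also leaves the maximizers unchanged, because the integrand is strictly positive and its logarithm is finite. Hence
\[
\operatorname{argmax}_{\tilde{x}}\, p(\tilde{x}|x_b,y^*) = \operatorname{argmax}_{\tilde{x}}\left[-\eta\,\ell(f(\tilde{x}),y^*) - d(x_b,\tilde{x})\right].
\]
Finally, maximizing $-g$ is the same as minimizing $g$, which gives \eqref{eq:map}. The identification with the CE problem of Section 2.1 is then immediate: it is that problem with $\lambda=\eta$.

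Because the argmax may not be a singleton, I will state the result as an equality of sets: the set of MAP estimates equals the set of CE minimizers. This matches the $\in$ notation used in the CE formulation. I will also note that the argument is unaffected when the maximum is not attained: both sets are then empty.

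The only delicate point is a measure-theoretic one. The MAP estimate is defined through the density with respect to Lebesgue measure, or counting measure if $\mathcal{X}$ is discrete. It therefore depends on which version of the density is chosen, so I will fix the continuous representative given by the exponential expression. With that representative fixed, the argument above goes through exactly as stated.
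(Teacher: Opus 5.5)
Your proof is correct and follows the paper's own argument. It drops the normalizing constant and then uses strict monotonicity to turn the posterior argmax into $\operatorname{argmin}[\eta\ell + d]$; you take $\log$ of the density where the paper uses monotonicity of $\exp$, and your added points on properness, set-valued argmax, and the choice of density version are sound refinements the paper leaves implicit.
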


\begin{proof}
$p(\tilde{x}|x_{b},y^{*}) \propto \exp(-\eta\ell(f(\tilde{x}),y^*)-d(x_b,\tilde{x}))$, and $\operatorname{argmax}$ is invariant under multiplication by a positive constant, so
$\hat{x}^{MAP} = \operatorname{argmax}_{\tilde{x}}\ p(\tilde{x}|x_{b},y^{*})
  = \operatorname{argmax}_{\tilde{x}}\ \exp(-\eta\ell(f(\tilde{x}),y^*)-d(x_b,\tilde{x}))$.
Since the exponential function is monotonically increasing, this is equivalent to $\operatorname{argmin}_{\tilde{x}}\left[\eta\ell(f(\tilde{x}),y^*)+d(x_b,\tilde{x})\right]$.
\end{proof}

This proposition shows that distance minimization in conventional CE is equivalent to MAP estimation of a Gibbs posterior within the generalized-Bayes framework. This equivalence is not merely a formal correspondence; it strengthens the theoretical justification of CE in the following sense. By reformulating CE as a Gibbs posterior, one can naturally introduce, beyond the single-point MAP estimate, a variety of decision rules that use the full posterior distribution (e.g., Bayes decision, CVaR-CE) as well as distribution-level evaluation metrics (e.g., success probability, stability). In other words, the justification put forward in this study is the structural claim that distance minimization is a special case of probabilistic inference, which in turn provides the theoretical grounds for applying the rich toolkit of Bayesian methods to CE.

This expression admits the following interpretation.

\begin{remark}[Relation between $\lambda$ and $\eta$]
There are two possible views on the relationship between $\lambda$ and $\eta$. (1) Under the view that identifies $\eta = \lambda$, $\tilde{x}^{CE} = \hat{x}^{MAP}$ holds in a set-valued sense. (2) Under the view that treats $\lambda$ and $\eta$ as parameters with distinct roles and fixes $\lambda = \eta = 1$, the role of the trade-off coefficient $\lambda$ is absorbed into the scale of the prior (the variance $\sigma^2$ of the Gaussian prior); setting $d(x_b,\tilde{x}) = \frac{1}{2\sigma^2}\|\tilde{x}-x_b\|^2$ likewise makes $\tilde{x}^{CE} = \hat{x}^{MAP}$ hold.
\end{remark}

For consistency of exposition and comparability with previous studies, we use $\eta=1$ as the default. This corresponds to view (2), in which the trade-off is represented through the relative scales of the loss and the prior. The behavior for $\eta \neq 1$ is examined in the $\eta$-sensitivity analysis in Section 3 (Figure 3). Under view (1), varying $\eta$ changes the spread of the posterior distribution. This interpretation also suggests extensions such as methods for selecting $\eta$ within generalized Bayes.

For the Gibbs posterior to be well defined as a probability distribution, however, the unnormalized density $\exp(-\eta\ell - d)$ must be integrable. If the loss function is nonnegative and measurable, then $\exp(-\eta\ell(f(\tilde{x}),y^{*})) \leq 1$, so the condition $\int_{\mathcal{X}}\exp(-d(x_{b},\tilde{x}))\,d\tilde{x} < \infty$ on the prior side (which holds for the norm-based distances used in this study) is a sufficient condition. Note that the loss function need not correspond to a likelihood.

The epistemological status of the posterior $p(\tilde{x}|x_b,y^*)$ introduced in this section is discussed in Section 5.

\subsection{Counterfactual Explanations Incorporating Model Uncertainty}

As a natural extension of the CE presented in Section 2.3, we now describe a formulation that accounts for model uncertainty. Here, model uncertainty broadly encompasses uncertainty about the configuration $\theta$ of the prediction model, including both the choice of model class (e.g., LightGBM, XGBoost, or Random Forest) and the parameter values within a model class. Our formulation accommodates both discrete $\theta$ (model-class choice) and continuous $\theta$ (model parameters). When multiple distinct models have comparable predictive performance, a phenomenon known as model multiplicity \parencite{jiang2024robust, pawelczyk2020counterfactual, kinjo2025robust}, a CE derived from one model may be invalid under another.

Suppose that we place a prior on $\theta$ and use a loss $L(D,\theta)$ in place of a likelihood to construct the generalized Bayes posterior $p(\theta|D) \propto \exp(-\eta' L(D,\theta))p(\theta)$. Integrating over this posterior gives
\begin{equation}
\label{eq:genbayes-ce}
p(\tilde{x}|x_{b},y^{*},D) \propto \int \exp(-\eta\,\ell(f_\theta(\tilde{x}),y^{*}))\,p(\tilde{x}|x_b)\,p(\theta|D)\,d\theta
\end{equation}
This is a fully generalized Bayesian CE that requires no likelihood; we refer to it as the Fully Generalized Bayes CE (FG-GBCE). Whereas the GBCE of Section 2.3 generalizes the prior $p(\tilde{x}|x_b)$, the FG-GBCE extends the framework along a separate dimension by incorporating uncertainty in the prediction model $\theta$. This construction, however, has a nested structure: it requires sampling from the posterior over $\theta$ and then over $\tilde{x}$ for each $\theta$, making it computationally expensive. Moreover, tree-based ensemble models such as LightGBM and XGBoost do not have a fixed-dimensional continuous parameter vector, which makes a direct application of this formulation with continuous $\theta$ difficult.

We therefore adopt, in this study, a practical special case in which $\theta$ is treated as a discrete variable over a finite set of model classes $\{1,\ldots,K\}$. Taking a finite set of models $\left\{ f^{(1)},\ldots,f^{(K)} \right\}$, selected by some procedure, as candidates, and setting $p(\theta = k|D) = w_k$, the integral in Eq.~\eqref{eq:genbayes-ce} reduces to a sum, yielding
\begin{equation}
    \label{eq:model-mixture}
    p(\tilde{x}|x_{b},y^{*},D) \propto \sum_{k = 1}^{K}{w_{k}\exp( - \eta\,\ell(f^{(k)}(\tilde{x}),y^{*}))\,p(\tilde{x}|x_{b})}
\end{equation}
We refer to this formulation as ModelUnc (Model-Uncertainty DP-GBCE). It is the special case of FG-GBCE in which $\theta$ is discrete over a finite set of model classes. When accounting for model multiplicity, we set $w_k=1/K$; when the models differ in performance, we set $w_k \propto \exp(-\gamma R_k)$, where $R_k$ is the cross-validation loss.

Note that this differs from approaches such as stacking, in which multiple models are combined into a single model beforehand.

Furthermore, if the losses $L(D,\theta)$ and $\ell(f_\theta(\tilde{x}),y^*)$ in the FG-GBCE (Eq.~\eqref{eq:genbayes-ce}) are each made to correspond to a likelihood, then, when $\theta$ represents the coefficients of a continuous parametric model, one obtains a fully Bayesian counterfactual explanation in the conventional sense (a Fully Bayesian CE) that jointly randomizes the prediction model's parameters and the counterfactual. This special case, however, requires strong generative-model assumptions (the design of a likelihood). This point is also addressed in the discussion in Section 5.

The proposed framework can therefore accommodate various forms of uncertainty and constraints, including model uncertainty.

\subsection{Applications of the Posterior Distribution}

The resulting Gibbs posterior $p(\tilde{x}|x_b,y^*)$ provides more than a single optimal solution: the full distribution can support a variety of decision-making and evaluation tasks. We consider three such uses: (1) selecting a CE from the posterior, (2) evaluating the resulting point estimate, and (3) evaluating the posterior distribution itself.

\subsubsection{Selecting a CE from the Posterior Distribution}

Several methods can be used to select a point-valued CE from the posterior. The appropriate decision rule depends on the problem structure, including the shape of the success region and the cost of failure. We consider three main rules, representing distinct decision-making perspectives: correspondence with conventional CE (MAP), point summarization through minimization of posterior expected decision loss (Bayes decision), and risk aversion (CVaR-CE).

\begin{description}
  \item[(a) MAP decision (the mode)]
    $\tilde{x}^{MAP} = \operatorname{argmax}_{\tilde{x}}\ p(\tilde{x}|x_b,y^*)$. Appropriate when the success region is unimodal, or, even if multimodal, when the selected mode has a clear meaning.
  \item[(b) Bayes decision (minimizing the expected decision loss)]
    \begin{equation}
      \tilde{x}^{Bayes} \in \underset{z \in \mathcal{X}}{\operatorname{argmin}}\ \mathbb{E}_{\tilde{x}\sim p( \tilde{x} | x_{b},y^{*} )}[\Delta(z,\tilde{x})]
    \end{equation}
    Here, $\Delta(z,\tilde{x})$ is a decision loss on the input space that measures the discrepancy between the decision $z$ and a counterfactual $\tilde{x}$ drawn from the posterior; note that it is not the goal-attainment loss $\ell$ itself. When $\Delta = \|\cdot\|^{2}$, the solution is the posterior mean; when $\Delta = \|\cdot\|$, it is the geometric median. Note, however, that when the posterior distribution is multimodal or curved, the posterior mean can fall in a low-density region that does not attain the goal -- a known limitation of point summarization. In such cases, an alternative is the medoid, which constrains the decision to lie within the posterior samples.
  \item[(c) Risk-averse decision (CVaR-CE)]
    Let $\mathcal{D}_{r}$ denote the perturbation distribution representing the error incurred when executing the CE (in this study, the isotropic normal distribution $\mathcal{N}(0,\sigma_{\delta}^{2}I)$; see Section 3 for the specific settings), and let $L_{\delta}(z) = \ell(f(z+\delta),y^{*})$ (with $\delta \sim \mathcal{D}_{r}$) denote the loss under perturbation. We define CVaR-CE as the point that minimizes the conditional expectation (CVaR) of the upper $(1-\tau)$ tail of this perturbed loss:
    \begin{equation}
      \tilde{x}^{CVaR} \in \underset{z \in \mathcal{X}}{\operatorname{argmin}}\ \mathrm{CVaR}_{\tau}\!\left( L_{\delta}(z) \right),
      \qquad
      \mathrm{CVaR}_{\tau}\!\left( L_{\delta}(z) \right) = \mathbb{E}_{\delta\sim\mathcal{D}_{r}}\!\left[ L_{\delta}(z) \,\middle|\, L_{\delta}(z) \geq \mathrm{VaR}_{\tau}\!\left(L_{\delta}(z)\right) \right]
    \end{equation}
    Here, $\mathrm{VaR}_{\tau}$ is the $\tau$-quantile of $L_{\delta}(z)$. In implementation, we restrict the candidate points $z$ to samples from the posterior $p(\tilde{x}|x_b,y^*)$ (see Section 3). This risk-averse rule limits exposure to poor outcomes under execution noise when success is uncertain or failure is costly.
\end{description}

Beyond these, many other approaches are conceivable, such as using the medoid; extracting a set of CEs using the HPD (highest posterior density) region when diversity is to be ensured; chance-constrained decisions; and sample-based decisions that sample directly from the distribution.

\subsubsection{Pointwise Evaluation Metrics for Individual CEs}

We define metrics for evaluating the quality of a single-point CE $\tilde{x}$ obtained by a decision rule. These are standard metrics based on existing studies \parencite{guidotti2024counterfactual, verma2024counterfactual, karimi2022survey}, and we use four: achieved loss, change distance, robustness, and plausibility. These four metrics respectively correspond to the main requirements demanded of a CE: goal attainment, minimality of change, stability at execution time, and feasibility with respect to the data.

\begin{align}
  L_{pt}(\tilde{x}) &= \ell(f(\tilde{x}),y^{*}) \\
  D_{pt}(\tilde{x}) &= d(\tilde{x},x_{b}) \\
  Rb(\tilde{x}) &= P_{\delta\sim\mathcal{D}_{r}}[\ell(f(\tilde{x}+\delta),y^{*}) \leq \varepsilon_{rb}] \\
  Plu(\tilde{x}) &= \frac{1}{q}\sum_{i \in N_{q}(\tilde{x})}\left\| \tilde{x} - x_{i} \right\|
\end{align}

$L_{pt}$ represents the degree of attainment of the desired output $y^*$; smaller is better. $D_{pt}$ represents the amount of change from the original input; smaller is better. $Rb$ is the probability that the prediction remains within the success region when a small perturbation $\delta \sim \mathcal{D}_{r}$ (the same perturbation distribution as in Section 2.5.1(c)) is added to $\tilde{x}$, and it reflects the stability of the CE (higher is better). $Plu$ is the average distance to the $q$ nearest neighbors in the training data, measuring whether the CE is a plausible point with respect to the data distribution (plausibility; smaller is better).

\subsubsection{Evaluation Metrics for the Posterior Distribution Itself}

Independently of the decision rule, it is also important to evaluate the posterior distribution $p(\tilde{x}|x_b,y^*)$ itself. The following metrics quantify the posterior probability of success, tail risk, dispersion, and variable-specific changes.

\begin{align}
  SP &= P_{\tilde{x}\sim p}[\ell(f(\tilde{x}),y^{*}) \leq \varepsilon_{sp}] \quad \text{(success probability: higher is better)}\\
  Tail &= \inf\{t:P(\ell(f(\tilde{x}),y^{*}) \leq t) \geq 1-\alpha\} \quad \text{(tail of the distribution: smaller is better)}\\
  Stability &= \mathrm{tr}[\mathrm{Cov}(\tilde{x})] \quad \text{(stability: smaller is better)}\\
  VarImp_j &= \mathbb{E}[|\tilde{x}_{j} - x_{b,j}|] \quad \text{(distributional variable importance)}
\end{align}

$SP$ is the proportion of posterior samples satisfying the success condition $\ell \leq \varepsilon_{sp}$, and it indicates how well the posterior distribution covers the success region. $Tail$ is the $(1-\alpha)$-quantile of the loss, i.e., $\mathrm{VaR}_{1-\alpha}$, and it captures the heaviness of the distribution's tail (the worst-case loss level). $Stability$ is the trace of the covariance matrix of the posterior distribution, representing the magnitude of the CE's variance -- that is, the uniqueness and stability of the solution. $VarImp_j$ is the expected amount of change in each variable $j$, indicating at the distribution level which variables are important in the counterfactual.

\subsection{Overview of the Computational Procedure}

Algorithm~\ref{alg:dpgbce} summarizes the full procedure described above, from the construction of the DP-GBCE (and the ModelUnc of Section 2.4) through the application of the decision rules to the computation of the evaluation metrics. The experiments in Section 3 follow this procedure, performing an approximate computation via importance sampling.

\begin{algorithm}[H]
\caption{Computational procedure for DP-GBCE (and ModelUnc)}
\label{alg:dpgbce}
\begin{algorithmic}[1]
\Require Base point $x_b$, target value $y^*$, prediction model $f$ (for ModelUnc, a set of models $\{f^{(k)}\}_{k=1}^{K}$ with weights $\{w_k\}$), loss $\ell$, temperature $\eta$, proposal-distribution scale $\sigma$ (corresponding to the distance prior $d(x_b,\tilde{x})=\frac{1}{2\sigma^2}\|\tilde{x}-x_b\|^2$), number of candidates $N_{\mathrm{cand}}$, number of samples $N_{\mathrm{samp}}$
\State Generate candidate points $\{\tilde{x}_i\}_{i=1}^{N_{\mathrm{cand}}} \sim \mathcal{N}(x_b,\sigma^2 I)$ from the proposal distribution \Comment{Match the proposal distribution to the distance prior $\propto\exp(-d(x_b,\tilde{x}))$}
\For{$i = 1,\ldots,N_{\mathrm{cand}}$}
  \If{single model (DP-GBCE)}
    \State $u_i \leftarrow \exp\!\left(-\eta\,\ell(f(\tilde{x}_i),y^{*})\right)$
  \Else{ (multiple models, ModelUnc)}
    \State $u_i \leftarrow \textstyle\sum_{k=1}^{K} w_k \exp\!\left(-\eta\,\ell(f^{(k)}(\tilde{x}_i),y^{*})\right)$
  \EndIf
\EndFor
\State Normalize the weights $\{u_i\}$ and draw $N_{\mathrm{samp}}$ samples with replacement via importance resampling to obtain posterior samples $\{\tilde{x}_s\}_{s=1}^{N_{\mathrm{samp}}}$
\State Apply a decision rule (MAP, Bayes decision, or CVaR-CE) to determine a single-point CE (Section 2.5.1)
\State Compute pointwise evaluation metrics for the selected CE (Section 2.5.2)
\State Compute distributional evaluation metrics for $\{\tilde{x}_s\}$ (Section 2.5.3)
\State \Return the CE selected by each decision rule and the corresponding evaluation metrics
\end{algorithmic}
\end{algorithm}

\section{Empirical Evaluation}

This section evaluates the proposed method on simulated and real data. Using two-dimensional simulated data, we first examine whether the DP-GBCE and model-uncertainty posteriors concentrate in the success region (Figures~\ref{fig:fig1} and \ref{fig:fig2}) and compare the trade-offs among the MAP, Bayes, and CVaR-CE decision rules (Table~\ref{tab:table1}). We then examine whether similar trade-offs arise in ten dimensions (Table~\ref{tab:table2}) and analyze the sensitivity of the decision metrics to the temperature parameter $\eta$ (Figure~\ref{fig:fig3}). Finally, we evaluate whether these findings persist under practical conditions using real data. Together, these experiments illustrate the additional information provided by a distributional approach relative to point-estimate CE methods.

\subsection{Simulated Data}

We consider a regression task. The two-dimensional data ($d=2$, $n=3{,}000$) were generated from the following nonlinear function:
\[
  y = 2.0\sin(x_1) + 0.8\,x_2^2 - 1.2\,x_1 x_2 + \varepsilon, \quad \varepsilon \sim \mathcal{N}(0,\, 0.3^2)
\]
The ten-dimensional data ($d=10$, $n=4{,}000$) were generated from
\[
  y = 1.5\sin(x_1) + 0.8\,x_2^2 - x_1 x_3 + 0.5\,x_4 + 0.3\sum_{j=5}^{7}x_j + \varepsilon, \quad \varepsilon \sim \mathcal{N}(0,\, 0.5^2)
\]
In both cases, the input variables were generated independently from $\mathcal{N}(0,1)$. The target value $y^*$ was set to the 90th percentile of the training outcomes, and squared-error loss, $\ell(\hat{y}, y^*) = (\hat{y} - y^*)^2$, was used.

To train the prediction models used to compute the CEs, we applied FLAML AutoML with eight candidate estimators: LightGBM, XGBoost, depth-limited XGBoost, HistGradientBoosting, Random Forest, ExtraTrees, ElasticNet, and SGD. The evaluation metric was MSE, the search-time budget for each estimator was capped at 25 seconds, and the random seed was fixed at 42 throughout. Model selection followed FLAML's default setting (\texttt{eval\_method="auto"}), which uses five-fold cross-validation at the data scale considered here. The best-performing model was used for the Gibbs posterior, whereas the top $K=3$ models by MSE were used for the model-uncertainty CE. The latter used equal weights, $w_k=1/K$, in Eq.~\eqref{eq:model-mixture}; we did not examine performance-based weights $w_k \propto \exp(-\gamma R_k)$ in these experiments. We set $\eta=1$.

Table~\ref{tab:leaderboard_sim} shows the top three models selected by AutoML.

\begin{table}[H]
  \centering
  \caption{Simulated data: top three models selected by AutoML (ordered by MSE)}
  \label{tab:leaderboard_sim}
  \begin{tabular}{llr}
    \toprule
    Data & Model & MSE \\
    \midrule
    2D & ExtraTrees & 0.146 \\
     & XGBoost (depth-limited) & 0.152 \\
     & LightGBM & 0.157 \\
    \midrule
    10D & LightGBM & 0.426 \\
     & XGBoost (depth-limited) & 0.436 \\
     & XGBoost & 0.437 \\
    \bottomrule
  \end{tabular}
\end{table}

We approximated the Gibbs posterior by importance sampling. Candidate points were generated from the distance-based Gaussian prior centered at $x_b$, weighted by $\exp(-\eta\ell)$, and resampled. We used $N_{\mathrm{cand}}=2\times10^4$ and $N_{\mathrm{samp}}=2\times10^3$ (proposal scale $\sigma=1.0$) in 2D, and $N_{\mathrm{cand}}=3.5\times10^4$ and $N_{\mathrm{samp}}=3\times10^3$ ($\sigma=1.0$) in 10D. The Bayes decision (Section 2.5.1(b)) was computed with decision loss $\Delta=\|\cdot\|^{2}$, taking the sample mean (posterior mean) of the posterior samples as the solution; we refer to this as the Mean decision in the tables and figures below. For CVaR-CE, we set $\tau=0.9$, thereby minimizing the average loss in the worst 10\% of outcomes. We used $n_{\mathrm{perturb}}=64$ perturbations drawn from $\delta \sim \mathcal{N}(0,\sigma_\delta^2 I)$ (2D: $\sigma_\delta=0.2$; 10D: $\sigma_\delta=0.15$) and evaluated at most 800 candidate points in 2D and 1,200 in 10D. We approximated the MAP decision by selecting the posterior sample with the highest estimated density. Density was estimated by Gaussian kernel density estimation (KDE) in 2D and by a $k$-nearest-neighbor score in 10D, where KDE is less reliable because of the curse of dimensionality. Specifically, the score was $r_k^{-d}$, where $r_k$ is the distance to the $k$th neighbor and $k=40$. The resulting MAP should therefore be regarded as an approximation, particularly in 10D.

To assess the proposition empirically, we also included a direct-optimization baseline (DirectOpt) that minimizes $\lambda\ell(f(\tilde{x}),y^*)+d(x_b,\tilde{x})$, with $d(x_b,\tilde{x})=\|\tilde{x}-x_b\|_2^2/(2\sigma^2)$ matching the Gaussian distance prior used for candidate generation. Because tree-based models are not differentiable, we used the gradient-free Nelder--Mead method. To reduce the risk of convergence to a local optimum, we performed multistart optimization from 20 initial points: $x_b$ and 19 points generated using the same proposal scale as in the importance-sampling procedure. We retained the solution with the smallest objective value, allowing at most 3,000 iterations per run. The proposition concerns the exact posterior mode, whereas the reported Gibbs MAP is selected from a finite Monte Carlo sample using an estimated density. Consequently, numerical agreement with DirectOpt is not guaranteed: differences may arise from finite candidate generation, resampling, density estimation, and local numerical optimization. DirectOpt is therefore reported as an objective-matched reference rather than as a validation test that must coincide with the approximate Gibbs MAP.

For the evaluation metrics, we used $n_{\mathrm{perturb}}=200$ perturbations for robustness $Rb$ and $q=20$ neighbors for plausibility $Plu$. The success thresholds $\varepsilon_{rb}$ and $\varepsilon_{sp}$ were set to 0.25 in 2D and 0.35 in 10D, and the tail probability was set to $\alpha=0.1$.

\subsubsection{Two-Dimensional Data (Figures 1, 2, and Table 1)}

\begin{figure}[H]
  \centering
  \includegraphics[width=0.7\linewidth]{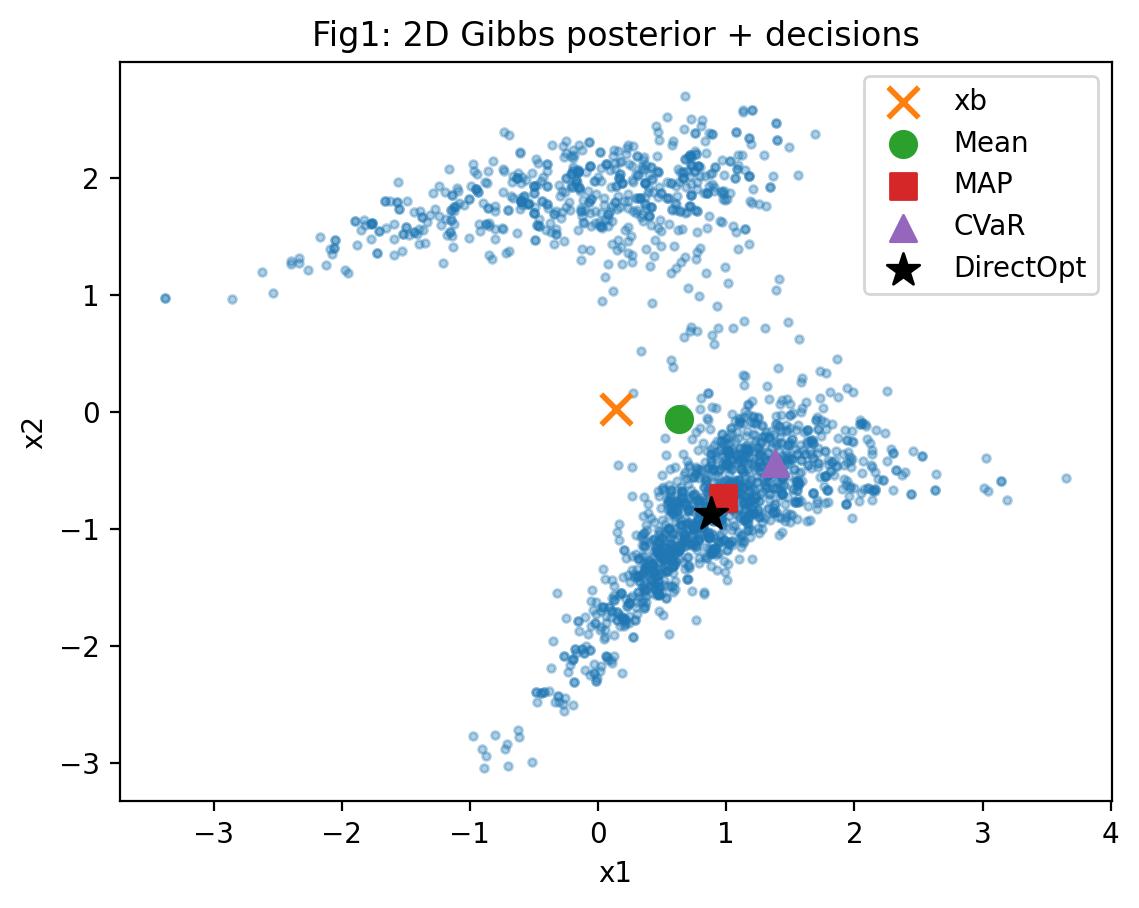}
  \caption{2D Gibbs posterior and CE decisions (MAP, Mean, CVaR-CE, DirectOpt$\star$)}
  \label{fig:fig1}
\end{figure}

Figure~\ref{fig:fig1} shows samples from the two-dimensional Gibbs posterior (blue points) and the CE selected by each decision rule. The posterior is multimodal and curved, reflecting the geometry of the regions that attain the target value $y^*$ (the 90th percentile of the training outcomes). Mean (green) lies between the base point and the main low-loss cluster and consequently has a large goal-attainment loss. MAP (red), CVaR-CE (purple), and DirectOpt (black star) lie within or near the main cluster, although they select different locations and therefore exhibit different loss--distance--robustness trade-offs (Table~\ref{tab:table1}).

\begin{figure}[H]
  \centering
  \includegraphics[width=0.7\linewidth]{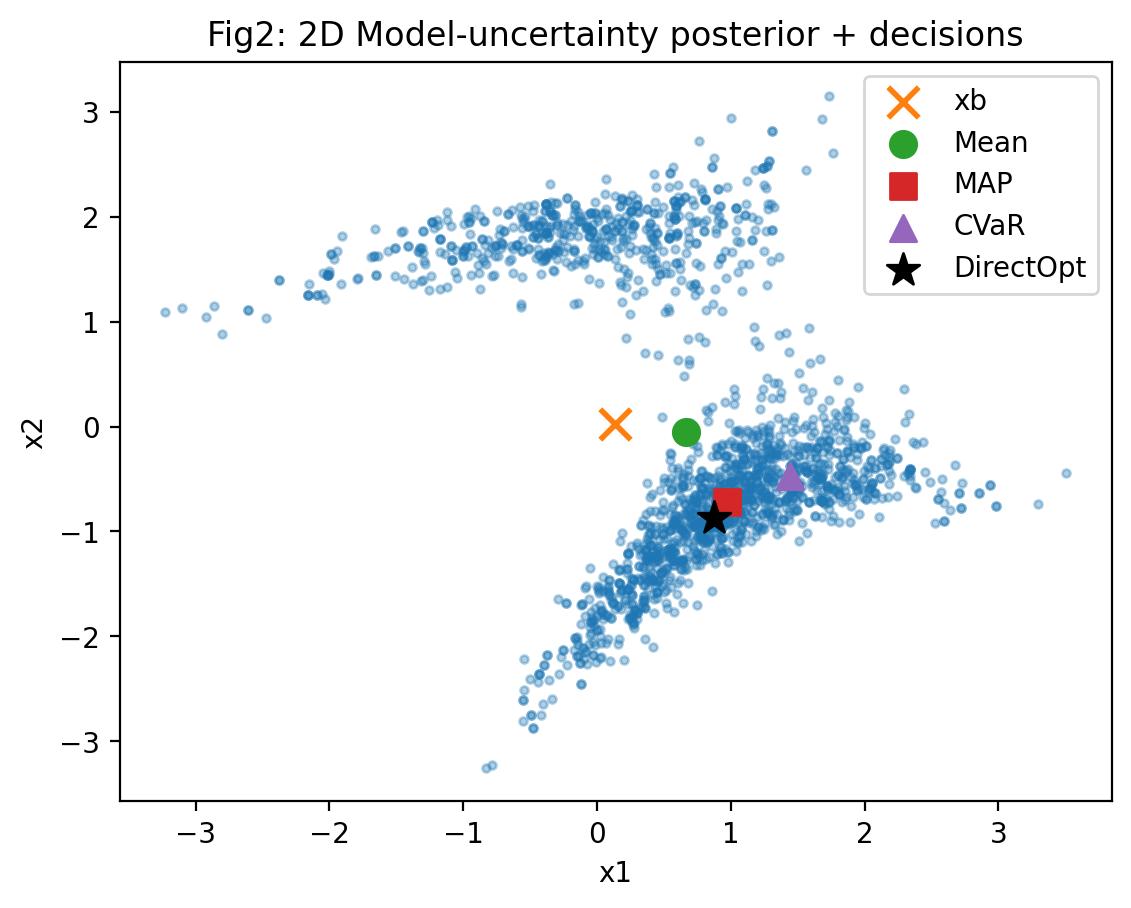}
  \caption{2D model-uncertainty CE posterior and CE decisions (MAP, Mean, CVaR-CE, DirectOpt$\star$)}
  \label{fig:fig2}
\end{figure}

Figure~\ref{fig:fig2} shows the posterior distribution of the model-uncertainty CE (a weighted mixture of the top three models) on the same data. Compared with the Gibbs posterior (Figure~\ref{fig:fig1}), mixing multiple models slightly increases the spread of the posterior distribution ($Stability$: 2.511 vs.\ 2.464) and slightly lowers the success probability ($SP$: 0.434 vs.\ 0.443). The decision points remain in broadly similar regions. For CVaR-CE, the model-uncertainty mixture yields higher robustness in this run ($Rb$: 0.555 vs.\ 0.465), illustrating that incorporating model uncertainty need not move every point-level metric in a uniformly adverse direction.

\begin{table}[H]
  \centering
  \caption{Two-dimensional simulated data: comparison of CE evaluation metrics ($VI_{x_j}$: distributional variable importance)}
  \label{tab:table1}
  \resizebox{\textwidth}{!}{
  \begin{tabular}{llrrrrrrrrr}
    \toprule
    Method & Decision rule & $L_{pt}$ & $D_{pt}$ & $Rb$ & $Plu$ & $SP$ & $Tail$ & $Stability$ & $VI_{x_1}$ & $VI_{x_2}$ \\
    \midrule
    Gibbs & Mean & 3.6534 & 0.5024 & 0.000 & 0.0941 & 0.443 & 1.7388 & 2.4636 & 0.808 & 1.167 \\
     & MAP & 0.4333 & 1.1313 & 0.455 & 0.1351 &  &  &  &  &  \\
     & CVaR & 0.1162 & 1.3255 & 0.465 & 0.1456 &  &  &  &  &  \\
    ModelUnc & Mean & 3.4477 & 0.5337 & 0.000 & 0.0978 & 0.434 & 1.9443 & 2.5109 & 0.849 & 1.145 \\
     & MAP & 0.4752 & 1.1205 & 0.420 & 0.1313 &  &  &  &  &  \\
     & CVaR & 0.0531 & 1.3980 & 0.555 & 0.1510 &  &  &  &  &  \\
    DirectOpt & --- & 0.0116 & 1.1616 & 0.485 & 0.1305 &  &  &  &  &  \\
    \bottomrule
  \end{tabular}
  }
  \par\vspace{2pt}{\footnotesize\textit{SP, Tail, Stability, and $VI_{x_j}$ are metrics of the posterior distribution as a whole and are common across decision rules within the same method; they are therefore shown only in the first row of each method group.}}
\end{table}

Table~\ref{tab:table1} reveals a clear trade-off among the decision rules. Mean has the shortest change distance but a large goal-attainment loss (Gibbs: $L_{pt}=3.653$; ModelUnc: 3.448) and zero perturbation robustness. MAP lowers the loss substantially while retaining a moderate distance. CVaR-CE further improves goal attainment and robustness at a modest additional distance. DirectOpt attains the smallest loss (0.012) with a distance similar to MAP and robustness of 0.485. The difference between DirectOpt and the approximate Gibbs MAP should not be read as contradicting the proposition: DirectOpt minimizes the continuous objective directly, whereas the reported MAP depends on finite posterior sampling and density estimation.

The $VI_{x_1}$ and $VI_{x_2}$ columns of Table~\ref{tab:table1} report the expected change $VarImp_j$ in each variable under each posterior. Under the Gibbs posterior, $VI_{x_1} \approx 0.808$ and $VI_{x_2} \approx 1.167$, indicating a larger change in $x_2$. In the data-generating process $y = 2.0\sin(x_1) + 0.8x_2^2 - 1.2x_1x_2$, $x_2$ affects $y$ through both a quadratic term and an interaction term, which may explain this larger posterior movement.

\subsubsection{Ten-Dimensional Data (Table 2)}

\begin{table}[H]
  \centering
  \caption{Ten-dimensional simulated data: comparison of CE evaluation metrics}
  \label{tab:table2}
  \resizebox{\textwidth}{!}{
  \begin{tabular}{llrrrrrrr}
    \toprule
    Method & Decision rule & $L_{pt}$ & $D_{pt}$ & $Rb$ & $Plu$ & $SP$ & $Tail$ & $Stability$ \\
    \midrule
    Gibbs & Mean & 1.0051 & 0.9314 & 0.190 & 2.2121 & 0.601 & 1.4130 & 9.6905 \\
     & MAP & 1.8156 & 0.9230 & 0.000 & 2.0186 &  &  &  \\
     & CVaR & 0.0439 & 3.2529 & 0.975 & 3.2542 &  &  &  \\
    ModelUnc & Mean & 0.9790 & 0.9015 & 0.295 & 2.2075 & 0.494 & 2.1290 & 9.6022 \\
     & MAP & 0.4047 & 1.7650 & 0.640 & 2.1765 &  &  &  \\
     & CVaR & 0.0005 & 2.8439 & 0.950 & 3.6927 &  &  &  \\
    DirectOpt & --- & 0.0170 & 1.4687 & 0.540 & 2.2095 &  &  &  \\
    \bottomrule
  \end{tabular}
  }
  \par\vspace{2pt}{\footnotesize\textit{SP, Tail, and Stability are metrics of the posterior distribution as a whole and are common across decision rules within the same method; they are therefore shown only in the first row of each method group.}}
\end{table}

Table~\ref{tab:table2} shows that the trade-off also depends on how the point decision is extracted in ten dimensions. Gibbs CVaR-CE attains low loss ($L_{pt}=0.044$) and high robustness ($Rb=0.975$), while ModelUnc CVaR-CE attains near-zero loss (0.0005) and robustness of 0.950, both at comparatively large distances. DirectOpt also attains low loss (0.017) at a shorter distance than either CVaR decision. In contrast, the density-based Gibbs MAP approximation has high loss (1.816) and zero robustness in this run, whereas the ModelUnc MAP performs better ($L_{pt}=0.405$, $Rb=0.640$). This instability is consistent with the difficulty of mode estimation from finite samples in high dimensions and underscores that the theoretical MAP equivalence does not imply equality between DirectOpt and a numerically estimated sample mode. The Gibbs success probability is higher in 10D than in 2D (0.601 vs.\ 0.443), but the candidate count and success threshold $\varepsilon_{sp}$ differ, so this difference should not be attributed solely to dimensionality.

\begin{table}[H]
  \centering
  \caption{Ten-dimensional simulated data: distributional variable importance $VI_{x_j} = \mathbb{E}[|\tilde{x}_j - x_{b,j}|]$}
  \label{tab:varimp_sim}
  \resizebox{\textwidth}{!}{
  \begin{tabular}{lrrrrrrrrrr}
    \toprule
    Method & $x_{1}$ & $x_{2}$ & $x_{3}$ & $x_{4}$ & $x_{5}$ & $x_{6}$ & $x_{7}$ & $x_{8}$ & $x_{9}$ & $x_{10}$ \\
    \midrule
    Gibbs & 0.843 & 1.044 & 0.763 & 0.836 & 0.807 & 0.779 & 0.818 & 0.770 & 0.789 & 0.805 \\
    ModelUnc & 0.849 & 0.984 & 0.789 & 0.807 & 0.802 & 0.793 & 0.796 & 0.797 & 0.792 & 0.799 \\
    \bottomrule
  \end{tabular}
  }
\end{table}

Table~\ref{tab:varimp_sim} shows the distributional variable importance $VarImp_j = \mathbb{E}[|\tilde{x}_{j} - x_{b,j}|]$ for the ten-dimensional data. Under the Gibbs posterior, the noise variables $x_8$--$x_{10}$ provide a prior-driven baseline of approximately 0.77--0.81. Most other variables are close to this range, while $x_2$ has the largest expected change (1.044). Because the data-generating process contains the quadratic term $0.8x_2^2$, this upward deviation indicates that movement in $x_2$ is especially prominent in constructing counterfactuals for this base point. The remaining coordinates should be interpreted cautiously: values near the baseline can reflect the spread of candidate generation as well as weak or diffuse loss constraints, so $VarImp_j$ is most informative when read relative to the noise-variable baseline rather than as an absolute importance score.

\subsubsection{Sensitivity Analysis for $\eta$ (Figure 3)}

\begin{figure}[H]
  \centering
  \includegraphics[width=0.48\linewidth]{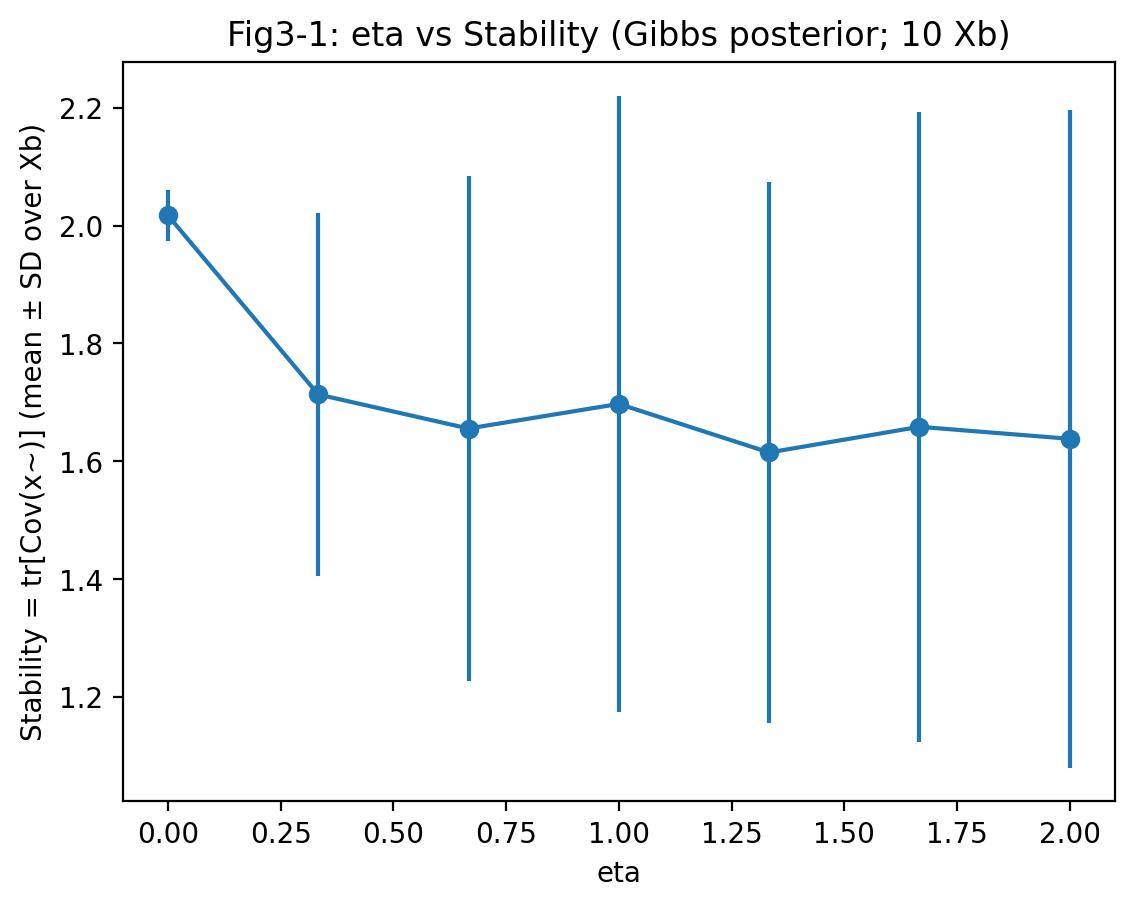}
  \hfill
  \includegraphics[width=0.48\linewidth]{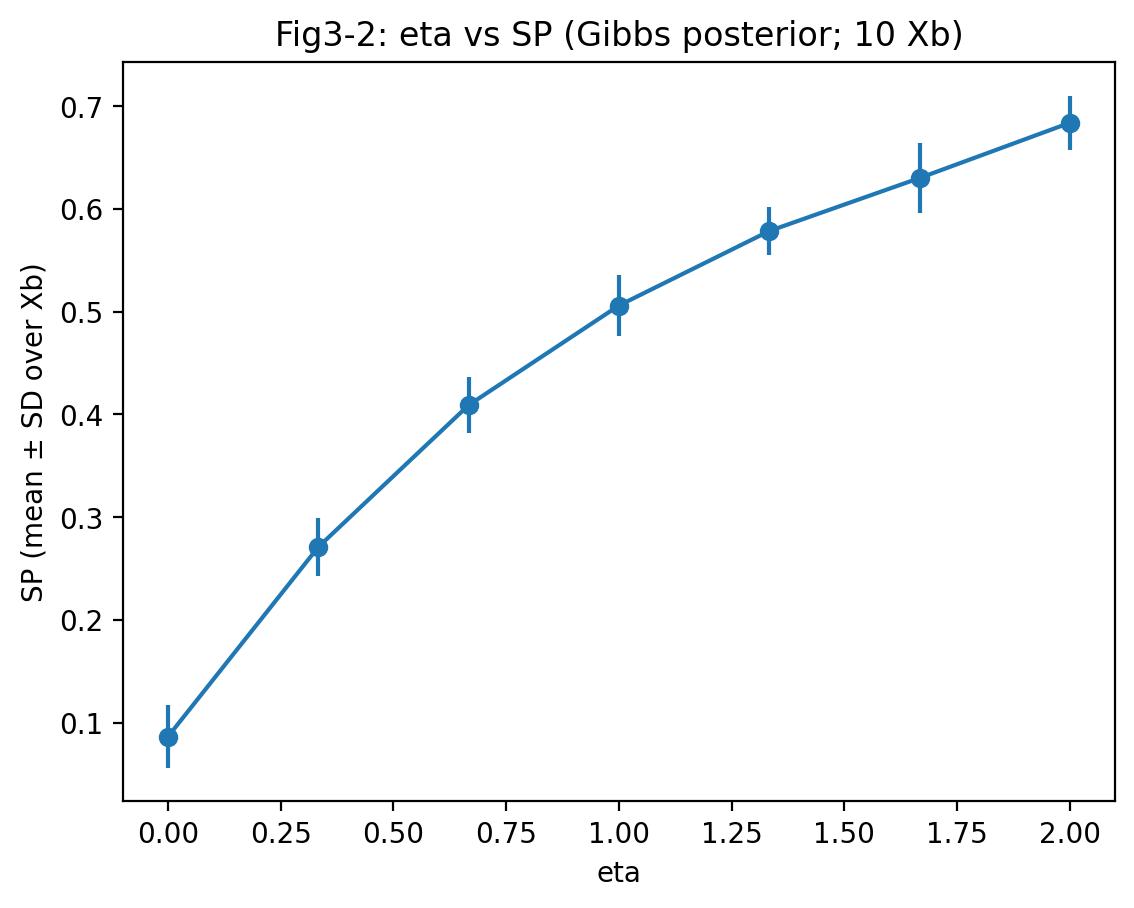}
  \caption{$\eta$-sensitivity analysis: mean $\pm$ SD of stability (left) and success probability (right) (Gibbs posterior; 5 randomly chosen base points $x_b$)}
  \label{fig:fig3}
\end{figure}

Figure~\ref{fig:fig3} shows the trend in the stability (left) and success probability (right) of the posterior distribution as $\eta$ varies. The mean success probability increases monotonically from about 0.09 at $\eta=0$ to 0.68 at $\eta=2$. Mean stability drops sharply at first (from about 2.02 to 1.71 by $\eta=1/3$) and then fluctuates around 1.6--1.7 rather than decreasing monotonically. Thus, larger $\eta$ more consistently improves concentration on the success region, while the diversity metric exhibits a plateau with Monte Carlo and base-point variation.

\begin{figure}[H]
  \centering
  \includegraphics[width=0.6\linewidth]{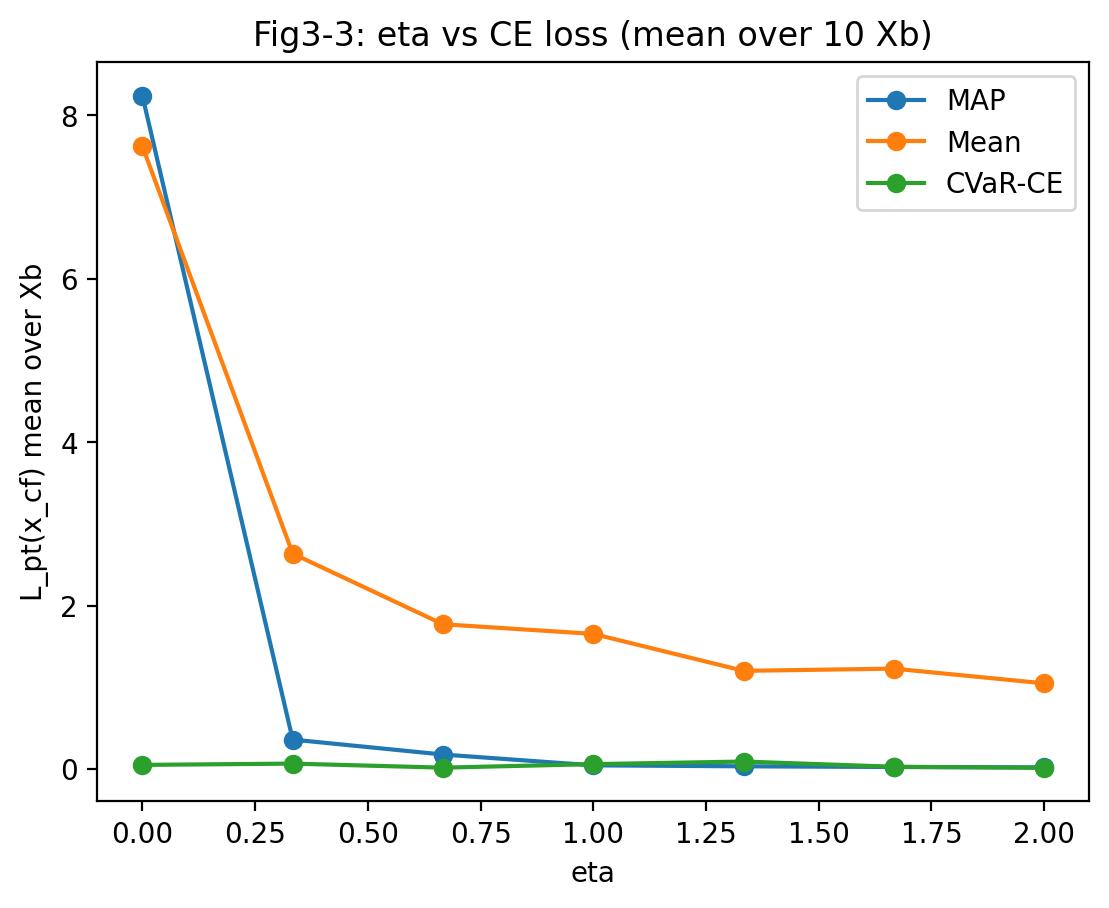}
  \caption{$\eta$-sensitivity analysis: mean CE loss (MAP, Mean, CVaR-CE)}
  \label{fig:fig3b}
\end{figure}

Figure~\ref{fig:fig3b} shows how the mean CE loss $L_{pt}$ for each decision rule changes with $\eta$. MAP loss falls sharply from 8.24 at $\eta=0$ to 0.045 at the default $\eta=1$ and remains near zero thereafter. CVaR-CE maintains a low but non-monotone loss throughout the grid. Mean loss decreases from 7.62 to 1.05 as $\eta$ rises to 2, but remains appreciably larger than the MAP and CVaR losses because the posterior mean can lie outside a curved or multimodal success region. These results support $\eta=1$ as a reasonable working value in this experiment while also showing that the decision rules respond differently to temperature.

\subsection{Real Data (Google Trends, One Piece--Related)}

Unlike the simulation, in which the true function is known, the real-data experiments focus on verifying whether the proposed method is practically usable in real-world settings where the true function is unknown. We therefore leave the visualization of the posterior distribution and the $\eta$-sensitivity analysis to the results on simulated data, and focus here on comparing the evaluation metrics across decision rules.

The real-data analysis uses monthly Japanese search-trend data obtained from Google Trends (January 2016 to December 2025; 119 observations). The target variable is search interest in ``One Piece'' (\textit{Wanpiisu}). We retrieved the manga/anime topic through Google Trends' topic-selection feature, thereby excluding searches for the unrelated clothing item. The explanatory variables are one-month-lagged (\textit{lag1}) search trends for ten related terms: Golden Week, summer vacation, winter vacation, New Year, \textit{Jump} (the magazine), LINE Manga, free manga, recommended manga, anime, and Netflix (Table~\ref{tab:desc_stats}). In the 10D analysis, these variables correspond, in order, to $x_1,\ldots,x_{10}$. The 2D analysis uses only $x_1'$, summer vacation\_lag1, and $x_2'$, \textit{Jump}\_lag1.

As in the simulation, we trained the prediction models using FLAML AutoML, evaluated them by MSE, and retained the top $K=3$ models. Table~\ref{tab:leaderboard_real} lists the selected models.

\begin{table}[H]
  \centering
  \caption{Real data: top three models selected by AutoML (ordered by MSE)}
  \label{tab:leaderboard_real}
  \begin{tabular}{llr}
    \toprule
    Data & Model & MSE \\
    \midrule
    2D & ExtraTrees & 58.08 \\
     & LightGBM & 60.34 \\
     & Random Forest & 62.80 \\
    \midrule
    10D & Random Forest & 51.93 \\
     & HistGradientBoosting & 53.35 \\
     & XGBoost & 54.50 \\
    \bottomrule
  \end{tabular}
  \begin{minipage}{0.8\linewidth}
    \footnotesize Note: MSE is the squared error of the Google Trends index (0--100 scale).
  \end{minipage}
\end{table}

The target $y^*$ was set to the 90th percentile of the observed outcomes, and squared-error loss was used. We generated $N_{\mathrm{cand}}=2\times10^4$ candidates in 2D and $N_{\mathrm{cand}}=3.5\times10^4$ in 10D, with $N_{\mathrm{samp}}=2\times10^3$ posterior samples in both cases. Although the 10D simulation used $N_{\mathrm{samp}}=3\times10^3$, the real-data analysis repeats the procedure with every eligible observation as $x_b$; we therefore used a common sample size to facilitate comparison between 2D and 10D. To reflect the 0--100 Google Trends scale, we set the proposal scale to $\sigma=10.0$ in 2D and $\sigma=8.0$ in 10D. The evaluation settings were adjusted accordingly: $\varepsilon_{sp}=\varepsilon_{rb}=15.0$ (equivalent under squared-error loss to an absolute prediction error of approximately 3.9 points), $\alpha=0.1$, and $\mathcal{D}_r=\mathcal{N}(0,5.0^2I)$ in both dimensions. For CVaR-CE, we used $\tau=0.9$, $n_{\mathrm{perturb}}=32$, and at most 400 candidate points. For robustness $Rb$, we used $n_{\mathrm{perturb}}=100$; for plausibility $Plu$, we used $q=10$ neighbors. Candidate points were not clipped to the feasible range $[0,100]$. The base points $x_b$ comprised all observations whose predicted values were below $y^*$ (2D: $n_{xb}=110$; 10D: $n_{xb}=106$), and we report the mean $\pm$ SD of each metric. MAP density estimation followed the simulation procedure: KDE in 2D and $k$-nearest-neighbor density estimation with $k=40$ in 10D.

\begin{table}[H]
  \centering
  \caption{Descriptive statistics of the real data (Google Trends)}
  \label{tab:desc_stats}
  \begin{tabular}{lrrrrr}
    \toprule
    Variable & Mean & SD & Min & Median & Max \\
    \midrule
    One Piece & 30.46 & 10.27 & 18.00 & 28.00 & 100.00 \\
    Golden Week\_lag1 & 6.94 & 14.28 & 0.00 & 1.00 & 100.00 \\
    Summer vacation\_lag1 & 24.50 & 29.85 & 4.00 & 9.00 & 100.00 \\
    Winter vacation\_lag1 & 14.39 & 20.13 & 2.00 & 6.00 & 100.00 \\
    New Year\_lag1 & 8.75 & 19.40 & 0.00 & 1.00 & 98.00 \\
    Jump\_lag1 & 67.29 & 14.98 & 38.00 & 66.00 & 100.00 \\
    LINE Manga\_lag1 & 66.41 & 15.30 & 36.00 & 66.00 & 100.00 \\
    Free manga\_lag1 & 66.05 & 16.97 & 40.00 & 66.00 & 100.00 \\
    Recommended manga\_lag1 & 59.03 & 16.81 & 30.00 & 57.00 & 100.00 \\
    Anime\_lag1 & 64.37 & 12.73 & 46.00 & 61.00 & 100.00 \\
    Netflix\_lag1 & 51.76 & 22.85 & 14.00 & 59.00 & 89.00 \\
    \bottomrule
  \end{tabular}
\end{table}

\begin{table}[H]
  \centering
  \caption{Real data (2D): mean $\pm$ SD across all samples ($VI_{x_j}$: distributional variable importance)}
  \label{tab:real2d}
  \resizebox{\textwidth}{!}{
  \begin{tabular}{llrrrrrrrrr}
    \toprule
    Method & Decision rule & $L_{pt}$ & $D_{pt}$ & $Rb$ & $Plu$ & $SP$ & $Tail$ & $Stability$ & $VI_{x_1'}$ & $VI_{x_2'}$ \\
    \midrule
    Gibbs & Mean & 29.395$\pm$115.667 & 19.134$\pm$6.809 & 0.510$\pm$0.254 & 10.080$\pm$4.490 & 1.000$\pm$0.000 & 2.054$\pm$0.637 & 129.296$\pm$91.664 & 11.222$\pm$4.630 & 16.498$\pm$5.649 \\
     & MAP & 1.040$\pm$0.684 & 19.320$\pm$8.345 & 0.494$\pm$0.200 & 9.286$\pm$5.346 &  &  &  &  &  \\
     & CVaR & 1.532$\pm$0.953 & 29.292$\pm$9.398 & 0.759$\pm$0.257 & 13.514$\pm$6.580 &  &  &  &  &  \\
    ModelUnc & Mean & 47.255$\pm$151.110 & 19.776$\pm$7.071 & 0.502$\pm$0.276 & 10.814$\pm$3.539 & 0.826$\pm$0.167 & 52.008$\pm$52.741 & 139.366$\pm$50.700 & 13.446$\pm$4.149 & 15.656$\pm$5.338 \\
     & MAP & 7.115$\pm$24.843 & 19.789$\pm$9.048 & 0.505$\pm$0.227 & 10.102$\pm$5.008 &  &  &  &  &  \\
     & CVaR & 10.786$\pm$28.001 & 29.401$\pm$9.099 & 0.731$\pm$0.277 & 14.239$\pm$7.146 &  &  &  &  &  \\
    \bottomrule
  \end{tabular}
  }
  \par\vspace{2pt}{\footnotesize\textit{SP, Tail, Stability, and $VI_{x_j}$ are metrics of the posterior distribution as a whole and are common across decision rules within the same method; they are therefore shown only in the first row of each method group.}}
\end{table}

Table~\ref{tab:real2d} reports the 2D Google Trends results as the mean $\pm$ SD across 110 base points. For the Gibbs posterior, $SP=1.000\pm0.000$, meaning that all posterior samples satisfied the success criterion for every base point. The change distance for Gibbs CVaR-CE, $D_{pt}=29.3\pm9.4$, is much larger than in the simulation ($\approx 3.2$), reflecting the 0--100 scale of the Google Trends variables. Gibbs MAP attains a low loss of $L_{pt}=1.040\pm0.684$ while limiting the amount of change. The Mean decision, by contrast, has $L_{pt}=29.4\pm115.7$, indicating unstable goal attainment. Relative to Gibbs, ModelUnc has a lower $SP$ (0.826 vs.\ 1.000) and a substantially higher $Tail$ (52.0 vs.\ 2.1), suggesting that mixing multiple models produces a heavier loss tail. The $VI_{x_j}$ columns report the expected changes in summer vacation\_lag1 ($x_1'$) and \textit{Jump}\_lag1 ($x_2'$). Under the Gibbs posterior, $VI_{x_1'} \approx 11.2$ and $VI_{x_2'} \approx 16.5$, averaged over $n_{xb}=110$, indicating that changes in \textit{Jump}-related search demand contribute more strongly to the CE for the One Piece search trend.

\begin{table}[H]
  \centering
  \caption{Real data (10D): mean $\pm$ SD across all samples}
  \label{tab:real10d}
  \resizebox{\textwidth}{!}{
  \begin{tabular}{llrrrrrrr}
    \toprule
    Method & Decision rule & $L_{pt}$ & $D_{pt}$ & $Rb$ & $Plu$ & $SP$ & $Tail$ & $Stability$ \\
    \midrule
    Gibbs & Mean & 24.881$\pm$34.005 & 13.547$\pm$7.643 & 0.451$\pm$0.194 & 33.400$\pm$9.638 & 1.000$\pm$0.000 & 1.505$\pm$0.400 & 607.938$\pm$72.132 \\
     & MAP & 0.345$\pm$0.544 & 20.456$\pm$10.013 & 0.464$\pm$0.196 & 36.569$\pm$9.783 &  &  &  \\
     & CVaR & 0.914$\pm$1.638 & 33.701$\pm$6.918 & 0.761$\pm$0.196 & 44.213$\pm$10.041 &  &  &  \\
    ModelUnc & Mean & 20.446$\pm$24.708 & 13.826$\pm$7.426 & 0.449$\pm$0.194 & 33.362$\pm$9.635 & 0.841$\pm$0.118 & 21.433$\pm$11.446 & 617.107$\pm$85.673 \\
     & MAP & 5.518$\pm$13.223 & 20.449$\pm$9.899 & 0.507$\pm$0.178 & 36.558$\pm$9.203 &  &  &  \\
     & CVaR & 3.909$\pm$9.061 & 34.689$\pm$6.427 & 0.767$\pm$0.197 & 44.868$\pm$9.266 &  &  &  \\
    \bottomrule
  \end{tabular}
  }
  \par\vspace{2pt}{\footnotesize\textit{SP, Tail, and Stability are metrics of the posterior distribution as a whole and are common across decision rules within the same method; they are therefore shown only in the first row of each method group.}}
\end{table}

Table~\ref{tab:real10d} reports the 10D results. All Gibbs posterior samples satisfied the success criterion for every base point ($SP=1.000\pm0.000$), and Gibbs MAP achieved a low loss of $L_{pt}=0.345\pm0.544$. The Mean decision had the smallest distance ($D_{pt}=13.5$) but a comparatively large and variable loss ($L_{pt}=24.9\pm34.0$), showing that the limitations of the posterior centroid in high dimensions also arise in the real-data analysis. Relative to Gibbs, ModelUnc had a lower $SP$ (0.841 vs.\ 1.000) and a substantially higher $Tail$ (21.4 vs.\ 1.5). As in the 2D analysis, mixing multiple models therefore produced a heavier loss tail.

\begin{table}[H]
  \centering
  \caption{Real data (10D): distributional variable importance $VI_{x_j}$}
  \label{tab:varimp_real10d}
  \resizebox{\textwidth}{!}{
  \begin{tabular}{lrrrrrrrrrr}
    \toprule
    Method & $x_{1}$ & $x_{2}$ & $x_{3}$ & $x_{4}$ & $x_{5}$ & $x_{6}$ & $x_{7}$ & $x_{8}$ & $x_{9}$ & $x_{10}$ \\
    \midrule
    Gibbs & 6.475 & 7.448 & 6.560 & 6.441 & 6.652 & 6.922 & 11.673 & 7.899 & 6.292 & 6.581 \\
    ModelUnc & 6.443 & 7.162 & 6.482 & 6.313 & 6.814 & 7.109 & 11.723 & 8.126 & 6.562 & 6.597 \\
    \bottomrule
  \end{tabular}
  }
\end{table}

For the ten-dimensional data, Table~\ref{tab:varimp_real10d} shows $VarImp_j$ for each variable (the maximum is $x_7$ = free manga\_lag1, at approximately 11.7, followed by $x_8$ = recommended manga\_lag1, at approximately 7.9). A larger value means that the variable plays a more important role in constructing the counterfactual.

These real-data experiments have several practical implications. First, the Mean decision exhibits substantial variation in $L_{pt}$ (e.g., $SD=115.7$ in 2D), making it an unstable point estimate that should be used with caution. As noted in Section 2.5.1(b), this behavior reflects a known limitation of point summaries: under a multimodal or curved posterior, the mean can lie in a low-density region. It does not indicate a defect in the posterior distribution itself. MAP, by contrast, attains a low loss while limiting the amount of change, whereas CVaR-CE improves robustness at the cost of a larger change. The choice between MAP and CVaR-CE should therefore depend on the intervention cost and the consequences of failing to attain the target. Second, ModelUnc has a substantially higher $Tail$ than the single-model Gibbs posterior, suggesting that accounting for model-selection uncertainty can reveal greater tail risk than an analysis based on a single model. Third, the distributional variable importance $VarImp_j$ identifies \textit{Jump}\_lag1, free manga\_lag1, and recommended manga\_lag1 as strong contributors to the counterfactuals. Monitoring these search-demand indicators -- interest in the serialization magazine and free or recommendation-oriented manga services -- may therefore help inform the timing of measures intended to increase interest in ``One Piece.''

\section{Related Work}

Several studies have examined CEs within a Bayesian framework. As noted in Section 1, \textcite{raman2023bayesian} formulate counterfactual generation as a probabilistic model of perturbations. Their hierarchical Bayesian framework incorporates validity and proximity as likelihood terms and samples diverse counterfactuals from the posterior. In particular, their population--subgroup--instance hierarchy improves robustness by steering counterfactuals toward high-density regions of the data and supports fairness assessments based on comparisons of recourse costs across protected subgroups. Although internally coherent as a probabilistic model, the method requires a computationally demanding likelihood-based inference procedure. Its use of HMC/NUTS also requires a differentiable classifier. By contrast, the DP-GBCE proposed here is based on generalized Bayes and constructs a pseudo-posterior that combines goal-attainment loss with change cost, without requiring an explicit generative model or a specified likelihood. It can therefore represent counterfactuals as a distribution for differentiable models such as neural networks, nondifferentiable learners such as random forests, and black-box predictors accessed through external APIs, provided that the loss can be evaluated. It also enables multiple decision rules to be compared within a common framework and is thus model agnostic.

Another relevant study is \textcite{nguyen2022robust}, who define recourse for black-box classifiers in Bayesian terms and derive a framework that identifies points with a high probability of receiving the desired prediction by minimizing posterior odds. To address the degradation of recourse validity under future changes to the classifier (model shift), they introduce a min--max optimization problem over ambiguity sets of class-conditional distributions defined by the Wasserstein distance, thereby obtaining recourse with a high success probability even in the worst case. Their approach, however, does not construct a distribution over counterfactuals from a prior and likelihood. Instead, it uses local sampling and density estimation to evaluate the odds of success and selects a single recourse point with a high success probability.

The DP-GBCE proposed here instead places a probability distribution directly over counterfactual candidates. Through decision rules such as MAP, the posterior mean, and CVaR, it provides a unified way to assess and summarize uncertainty arising from multimodality or dispersion in the success region. Thus, whereas \textcite{nguyen2022robust} focus on robustness to model changes, our focus is robust decision-making under uncertainty about the success region.

Research on distributions of CEs can be divided broadly into two categories: work that considers a distribution of counterfactuals for a single observation and work that evaluates distributions of counterfactuals across a population. The first category seeks to represent counterfactual diversity and uncertainty explicitly, thereby avoiding the instability associated with reliance on a single optimum. It includes methods that generate diverse sets of CEs and methods based on probabilistic generative models \parencite[e.g.,][]{mothilal2020explaining, raman2023bayesian}. The second category focuses on the fairness of recourse at the population level, including comparisons of the distribution of recourse costs (corresponding to $D_{pt}$ in this study) across subgroups and analyses of institutional fairness \parencite[e.g.,][]{ustun2019actionable}. Our study belongs to the first category: it formulates the counterfactual for a single observation as a probability distribution and incorporates decision rules directly into this distributional CE framework.

The robustness of a CE may be challenged by several factors. \textcite{jiang2024robust} classify robust CEs into four categories: (1) model changes (MC), or robustness to changes caused by retraining or distribution shift; (2) model multiplicity (MM), or uncertainty arising from the coexistence of multiple models with comparable performance; (3) noisy execution (NE), or noise and errors introduced when a CE is implemented; and (4) input changes (IC), including the consistency of explanations for similar inputs. This study primarily addresses robustness under MM. For this setting, \textcite{pawelczyk2020counterfactual} derive theoretical upper bounds on CE costs under predictive multiplicity based on disagreement among classifiers, and empirically evaluate the robustness of existing CE methods, including methods constrained to data-supported regions. The multi-objective optimization approach of \textcite{kinjo2025robust}, meanwhile, constructs a Pareto set by minimizing the losses for all models simultaneously and thereby makes the trade-off structure among solutions explicit. Our approach differs in that it integrates model uncertainty probabilistically through generalized Bayes and yields a distributional CE with embedded decision rules.

The relationship between our framework and both the multi-objective approach of \textcite{kinjo2025robust} and min--max approaches that provide worst-case guarantees requires clarification. The effective loss of the mixture posterior in Eq.~\eqref{eq:model-mixture},
$-\eta^{-1}\log\sum_k w_k e^{-\eta\ell_k}$, converges to the weighted expected loss $\sum_k w_k\ell_k$ as $\eta\to0$, corresponding to a linear scalarization of a multi-objective problem. As $\eta\to\infty$, however, it converges to $\min_k\ell_k$. The mixture posterior is therefore an OR-type aggregation that assigns mass to regions in which at least one model succeeds. Its direction of aggregation differs from the AND-type worst-case guarantee generally provided by min--max optimization, which seeks validity under every model. Consequently, our framework does not provide a worst-case guarantee. Instead, Bayesian marginalization exposes disagreement among models through greater posterior dispersion, reflected in increases in $Tail$ and $Stability$, as observed in Section 3. When worst-case robustness is required, a possible extension would replace the mixture with a loss-side aggregation such as $\exp(-\eta\max_k\ell_k)$. Establishing the properties of such an extension is left for future work.

\section{Discussion}

We first showed that a counterfactual explanation obtained by distance minimization is equivalent to the MAP estimate of a Gibbs posterior over CEs. This result provides a theoretical justification, from the perspective of Bayesian inference, for the conventional cost-minimization formulation. We then extended the framework by proposing a CE posterior that integrates model uncertainty probabilistically. For the resulting posterior, we introduced two decision rules beyond MAP: a Bayes decision that minimizes expected decision loss and CVaR-CE. This provides a unified framework for multiple decision-making objectives. Finally, we proposed metrics for evaluating both individual CEs and the posterior distribution as a whole, and quantified the trade-offs among the decision rules through experiments on simulated data and real Google Trends data.

The significance of these results can be summarized in four points. First, we theoretically established that CE distance minimization is equivalent to MAP estimation under a generalized Bayes Gibbs posterior. Conventional CEs have been justified on practical and philosophical grounds as minimum-cost changes that attain a target; by identifying this formulation as a special case of generalized Bayes, namely MAP estimation under a Gibbs measure, our study provides a broader theoretical foundation. Second, the proposed framework is model agnostic: it requires neither an explicit generative model nor a specified likelihood and can be applied to any learner, differentiable or otherwise, provided that its loss can be evaluated. It can therefore construct counterfactuals for black-box predictors that cannot be handled by methods requiring gradient information. Third, the posterior perspective places MAP, Bayes, and CVaR-CE decisions within a single framework, allowing the decision rule to be selected according to the structure of the problem, such as the shape of the success region or the cost of failure. Fourth, for model multiplicity, where several models have comparable predictive performance, the framework naturally yields a distributional CE that incorporates model uncertainty by mixing the model-specific posteriors using Bayesian weights.

The interpretation of the posterior $p(\tilde{x}|x_b,y^*)$ underlying these contributions warrants further discussion. The prior $p(\tilde{x}|x_b)$ is a candidate distribution based solely on proximity to $x_b$, whereas the prediction model $f$ supplies additional information by imposing the requirement that the target $y^*$ be attained. The posterior is constructed under this requirement. With a single prediction model (Section 2.3), the spread of the posterior reflects only the multiplicity of the success region, which would remain even if the true $f$ were known perfectly. With multiple prediction models (Section 2.4), an additional source of uncertainty arises because the models impose different requirements; in principle, this uncertainty can be reduced by acquiring more information. As discussed in Section 2.4, ModelUnc is a practical approximation to FG-GBCE and has the classical evidence-based structure in which uncertainty about $\theta$ can diminish as the amount of data $D$ increases. In the experiments in Section 3, ModelUnc produced larger values of $Tail$ and $Stability$ than the single-model Gibbs posterior. This reflects the wider set of candidates that remains when the models impose conflicting constraints, indicating that disagreement among the models is appropriately represented in the posterior. In this sense, the posterior is best understood not as an inference about an unknown fact, but as a framework for choosing among multiple valid solutions under a given constraint (the prediction model) using the tools of Bayesian decision theory, including MAP, expected-loss minimization, and CVaR.

This study also has several limitations. First, the efficiency of the importance-sampling approximation decreases as the dimension of $x$ increases. Second, we fixed the temperature parameter at $\eta=1$ and did not establish a theoretically grounded method for selecting it. Third, our analysis focused primarily on continuous features and did not accommodate binary features or constraints distinguishing mutable from immutable features. Finally, although we fixed the random seed in both the simulated- and real-data experiments to ensure reproducibility, FLAML's AutoML search depends on its time budget. The selected models and resulting estimates may therefore vary slightly with the budget.

Several directions for future work follow from these limitations. More scalable sampling methods, such as Metropolis--Hastings (MH), could improve approximation efficiency in high dimensions. The temperature parameter $\eta$ might be selected using theoretically grounded methods based on SafeBayes or PAC-Bayes. Extending the framework to binary features and to constraints on mutable and immutable features is also important. Beyond model multiplicity, the framework may be applicable to other forms of robustness, including noisy execution and input changes. A further direction is to use the posterior distribution over CEs directly in downstream decision-making, such as policy planning or personalized recommendations.

Finally, an important extension is to make the FG-GBCE described in Section 2.4, and its special case of a fully Bayesian CE, computationally practical. These formulations can represent epistemic uncertainty about $\tilde{x}$ in a more classical sense, but they require a nested procedure that combines sampling from the posterior over the model parameters $\theta$ with sampling over $\tilde{x}$. This procedure is computationally expensive, and tree-based models present the additional structural difficulty that they lack a fixed-dimensional continuous parameter vector $\theta$. Future work should develop approximations that overcome these practical barriers and investigate extensions such as $p(\tilde{x}|D)$ that incorporate $D$ to assess validity in terms of proximity to the observed data.

\printbibliography[title={References}]

\appendix
\section*{Appendix}

\subsection*{Variational Derivation of the Gibbs Posterior}

Rather than estimating the counterfactual $\tilde{x}$ as a single point, we treat it as a distribution $q(\tilde{x})$. Let $L(\tilde{x})=\ell(f(\tilde{x}),y^*)$ denote the loss measuring attainment of the target $y^*$, and introduce a prior $\pi(\tilde{x}\,|\,x_b)$ representing natural changes from the base point $x_b$. Following a variational principle, we seek to reduce the expected loss $\mathbb{E}_q[L(\tilde{x})]$ while controlling divergence from the prior $\pi(\tilde{x}\,|\,x_b)$ through $\mathrm{KL}(q\|\pi)$. The Donsker--Varadhan variational formula implies that, for any distribution $q$ absolutely continuous with respect to $\pi$ and any $\eta>0$,

\begin{equation}
\label{eq:dv-bound}
\mathbb{E}_{\tilde{x}\sim q}[L(\tilde{x})] + \frac{1}{\eta}\mathrm{KL}(q\|\pi) \,\geq\, -\frac{1}{\eta}\log \mathbb{E}_{\tilde{x}\sim\pi}\!\left[e^{-\eta L(\tilde{x})}\right].
\end{equation}

The expectation on the right-hand side is the normalizing constant (partition function), namely the prior expectation of the exponentially weighted loss. The functional on the left-hand side, which expresses the trade-off between loss and proximity to the prior, is uniformly bounded below by the right-hand side. Equality holds if and only if $q$ is the Gibbs posterior $q_\eta\propto\pi e^{-\eta L}$. Thus, the Gibbs posterior arises as the distribution that optimally balances the loss against divergence from the prior. Specifically,

\begin{equation}
\label{eq:gibbs-argmin}
q_\eta \in \arg\min_{q}\left\{\mathbb{E}_q[L(\tilde{x})] + \frac{1}{\eta}\mathrm{KL}(q\|\pi)\right\},
\end{equation}

with the closed-form expression

\begin{equation}
\label{eq:gibbs-closedform}
q_\eta(\tilde{x}) \propto \pi(\tilde{x}\,|\,x_b)\exp\{-\eta L(\tilde{x})\}.
\end{equation}

The parameter $\eta$ can therefore be interpreted as a temperature (or inverse-temperature) parameter controlling the trade-off between loss minimization and proximity to the prior, which represents the naturalness of a change. As $\eta$ increases, the distribution concentrates in low-loss regions; as it decreases, the distribution remains more diffuse and closer to the prior.

\end{document}